\def\eqref#1{equation~\ref{#1}}
\def\1{\bm{1}}
\DeclareMathAlphabet{\mathsfit}{\encodingdefault}{\sfdefault}{m}{sl}
\SetMathAlphabet{\mathsfit}{bold}{\encodingdefault}{\sfdefault}{bx}{n}
\newcommand{\E}{\mathbb{E}}
\newcommand{\pol}[0]{\pmb{\pi}}
\newcommand{\norm}[1]{\left\lVert#1\right\rVert}
\newtheorem{proposition}{Proposition}
\declaretheoremstyle[%
  spaceabove=-6pt,%
  spacebelow=6pt,%
  headfont=\normalfont\itshape,%
  postheadspace=1em,%
  qed=\qedsymbol%
]{mystyle} 
\title{Influence-Based Reinforcement Learning for Intrinsically-Motivated Agents}
\author{Ammar Fayad\thanks{Equal Contribution} \\Dept. of EECS, Dept. of Brain and Cognitive Sciences\\
Massachusetts Institute of Technology (MIT)\\
\texttt{afayad@mit.edu} \\
\And
Majd Ibrahim$^*$\\
Department of Electrical and Computer Engineering\\

Higher Institute for Applied Sciences and Technology (HIAST) \\
\texttt{majd.ibrahim@hiast.edu.sy} \\

}
\begin{document}

\maketitle
\begin{abstract}
Discovering successful coordinated behaviors is a central challenge in Multi-Agent Reinforcement Learning (MARL) since it requires exploring a joint action space that grows exponentially with  the  number  of  agents. In this paper, we propose a mechanism for achieving sufficient exploration and coordination in a team of agents. Specifically, agents are rewarded for contributing to a more diversified team behavior by employing proper intrinsic motivation functions. To learn meaningful coordination protocols, we structure agents’ interactions by introducing a novel  framework, where at each timestep, an agent simulates counterfactual rollouts of its policy and, through a sequence of computations, assesses the gap between other agents’ current behaviors and their targets. Actions that minimize the gap are considered highly influential and are rewarded. We evaluate our approach on a set of challenging tasks with sparse rewards and partial observability that require learning complex cooperative strategies under a proper exploration scheme, such as the StarCraft Multi-Agent Challenge. Our methods show significantly improved performances over different baselines across all tasks. 

\end{abstract}
\section {Introduction}
Deep Reinforcement Learning (DRL) has been applied to solve various challenging problems, where an agent typically learns to maximize the expected sum of extrinsic rewards gathered as a result of its actions performed in the environment \citep{sutton1998introduction}. Multi-Agent Reinforcement Learning (MARL) refers to the task of training a set of agents to maximize collective and/or individual rewards, while existing in the same environment and interacting with each other.

Recent works have shown that agents with coordinated behaviors learn remarkably faster \citep{roy2019promoting} since coordination helps the discovery of effective policies in cooperative tasks. Nevertheless, achieving coordination among agents still remains a central challenge in MARL \citep{jaques2019social}. Prominent works often resort to a popular learning paradigm called Centralized Training with Decentralized Execution (CTDE) \citep{lowe2017multi, foerster2018counterfactual}, where each agent is evaluated using a centralized critic and has access to extra information about the policies of other learning agents during training. At the time of execution, policies’ actions are restricted to local information only (i.e. their own observations). To that end, we propose a novel approach that aims at promoting coordination for cooperative tasks by augmenting CTDE MARL main return-maximization objective with an additional multi-agent objective that acts as a policy regularizer; we refer to the latter objective as the $influence ~function$.  To build intuition, a chosen agent, which we call the “influencer”, assesses the progress that other agents are making given its current policy and consequently learns behaviors that will result in an improved  performance of its teammates.
Concretely, we formulate the influence of an influencer $\pi$ as an estimation of the dissimilarity between other agents’ behaviors and their targets given the current behavior of $\pi$. The influencer is encouraged to learn behaviors that are expected to minimize that dissimilarity.  We also propose two approaches to estimate the influence and empirically show that they yield unbiased estimates of the true value. 

To that end, agents acting upon the proposed coordination paradigm learn to efficiently exploit the observed joint action space using available information. However, and since the joint space grows exponentially with the number of agents, it is highly unlikely that agents will have access to sufficient information to learn optimal behaviors to solve the task at hand; this problem arises in many scenarios such as sparse-reward environments, thus a proper exploration scheme is often required.  However, many existing multi-agent deep reinforcement learning algorithms still use mostly noise-based  techniques \citep{liu2021cooperative, rashid2018qmix, yang2018mean}. Moreover, independent exploration proved to be inefficient in cooperative settings \citep{roy2019promoting}. Recently, this challenge was addressed through Intrinsic Motivation (IM) \citep{jaques2019social, du2019liir, zhou2020learning}. Many approaches employ IM to encourage exploration of state-space \citep{han2020curiosity, burda2018exploration} or state-action space \citep{fayad2021behavior} by identifying novel configurations and rewarding an agent for visiting them. We provide an extension of these ideas into multi-agent settings and further build connection between reward shaping and coordinated behavior learning, where we choose an agent to act as an influencer (i.e. regularize its standard objective using the influence function) while other agents learn to maximize the expected sum of both extrinsic and intrinsic rewards.

To sum up, our main contributions are threefold: 1) developing an influence function to promote learning coordinated behaviors and improve team performance; 2) extending exploration via random network distillation to multi-agent settings by crafting a "novelty" function that rewards under-explored behaviors; 3) formulating a novel intrinsic incentive to promote learning diverse team behaviors to help uncover complex behaviors in a collaborative way.

We demonstrate the effectiveness of our methods on a comprehensive set of challenging tasks which include, but not limited to, the StarCraft Multi-Agent Challenges (SMAC) \citep{samvelyan2019starcraft} and the Multi-Agent Particle Environments (MAPE) \citep{mordatch2018emergence, lowe2017multi}. Empirical results show a significant improvement over a wide variety of state-of-the-art MARL approaches. We also conduct insightful ablation studies to understand the relative importance of each component of the approach individually. 

\section{Background}
\subsection{Markov games} 
Also called Stochastic games \citep{littman1994markov}, are the foundation for much of the research in multi-agent reinforcement learning. Markov games are a superset of Markov decision process (MDPs) and matrix games, including both multiple agents and multiple states. Formally, a Markov game consists of a tuple $\langle N,S,A,T,R \rangle$ where: $N$ is a finite set of agents $|N|=n \geq 2$; $S$ is a set of states, where the initial states are determined by a distribution $\rho: S \rightarrow [0,1]$; $A=\prod_{k=1}^n A_k$ is the set of joint actions; and $T: S \times A \times S \rightarrow [0,1]$ is the transition probability function.

In a Markov game, each agent is independently choosing actions and receiving rewards. Conventionally, an agent $k$ aims to maximize its own total expected return $R_k=\sum_{t=0}^T \gamma^t r_t^{(k)}$ where $\gamma$ is a discount factor and $T$ is the time horizon. 

\subsection{Multi-Agent Deep Deterministic Policy Gradient}
MADDPG ~\citep{lowe2017multi} is a multi-agent extension of the DDPG algorithm~\citep{lillicrap2015continuous}. It adapts the CTDE paradigm, where each agent $i$ possesses its own deterministic policy $\mu^{(i)}$ for action selection and critic $Q^{(i)}$ for state-action value estimation, respectively parameterized by $\theta^{(i)}$ and $\phi^{(i)}$. All parametric models are trained off-policy from previous transitions $\zeta_t \coloneqq (\mathbf{o}_t, \mathbf{a}_t, \mathbf{r}_t, \mathbf{o}_{t+1})$ uniformly sampled from a replay buffer $\mathcal{D}$. Note that $\mathbf{o}_t \coloneqq [o_t^1, ..., o_t^N]$ is the joint observation vector and $\mathbf{a}_t \coloneqq [a_t^1, ..., a_t^N]$ is the joint action vector, obtained by concatenating the individual observation vectors $o_t^{(i)}$ and action vectors $a_t^{(i)}$ of all $N$ agents. Each centralized critic is trained to estimate the expected return for a particular agent $i$ from the Q-learning loss:
\begin{equation}
\label{eq:Lcritic}
\begin{split}
\mathcal{L}^{(i)}(\mathbf{\phi}^{(i)}) 
&= \mathbb{E}_{\zeta_t \sim \mathcal{D}} 
\left[ 
    \norm{Q^{(i)} (\mathbf{o}_t, \mathbf{a}_t; \phi^{(i)})
    - y^{(i)}_t }^2\right]
\\
y^{(i)}_t &= r_t^{(i)} + \gamma Q^{(i)} (\mathbf{o}_{t+1}, \mathbf{a}_{t+1}; \bar{\phi}^{(i)})\left|_{a_{t+1}^{(j)} = \mu_{j} (o_{t+1}^{(j)}; \bar{\theta}^{(j)})\, \forall j} \right.
\end{split}
\end{equation}
Each policy is updated to maximize the expected discounted return of the corresponding agent $i$ :
\begin{equation}
    J_{PG}^{(i)}(\mathbf{\theta}^{(i)}) = \mathbb{E}_{\mathbf{o}_t \sim \mathcal{D}} \left[Q^{(i)} (\mathbf{o}_t, \mathbf{a}_t) \bigg|_{{a_t^{(j)} = \mu_j(o_t^{(j)};\,{\theta}^{(j)})\,}} \right]
\label{eq:JPG}\end{equation}
Notice that while optimizing an agent's policy, all agents' observation-action pairs are taken into consideration. By that, the value functions of all agents are trained in a centralized, stationary environment, despite happening in a multi-agent setting. Moreover, this procedure allows for the learning of coordinated strategies, yet needs to be augmented with efficient exploration methods that reward novel action configurations which may lead to the discovery of higher-return behaviors.

\section{Methods}
\subsection{Basic Influence}
Intuitively, one can define coordination in a team of agents as the behavior of each individual agent being informed by other agents. Furthermore, agents' behaviors can be inter-affected either directly through communication for example or indirectly through task-specific shared goals and/or rewards  or the dynamics of the environment. We hypothesize that when agents learn in a cooperative setting, they tend to affect each other's exploitation processes, we confirm the hypothesis throughout the paper and build on that to formalize a general method to foster influential interactions and learn meaningful coordination protocols. Specifically, we introduce a novel framework to assess the influence that agent $\pi$, at timestep $t$, has on a set of agents upon taking an action $a_t^{(\pi)}$ in a global state $s_t$. More concretely, consider $n$ agents, namely $\pi, \mu_1, \mu_2, ..., \mu_{n-1}$. Define $\pmb{\mu}=[\pi,\mu_1,...,\mu_{n-1}]^T$ as the joint policy. We use this notation throughout the rest of the paper.
Essentially, the agent $\pi$, which we call the "influencer", asks a retrospective question: "How much are agents $\{\mu_k\}_{k=1}^{n-1}$ (i.e. the "influencees") expected to get closer to their target returns after $\pi$   executes an action $a_t^{(\pi)}$ in a global state $s_t$?" \footnote{In some cases, the agents might not have full access to the state $s$ information even during training. However, a straightforward approach is to substitute $s$ with a concatenation of all agents' observations.}
Meaning that state-action pairs that lead agents $\{\mu_k\}_{k=1}^{n-1}$ closer to their target returns are considered highly influential and are rewarded. The goal of this section is to show how $\pi$ can learn effective policies that drive teammates' behaviors towards their targets by estimating its influence. 
\subsubsection{Influence with Single Estimator}
Formally, we quantify the influence $F_\pi$ of agent $\pi$ on  $\{\mu_i\}_{i=1}^{n-1}$ by initializing a network $Q^{cen} : S \times A \rightarrow \mathbb{R}^{n-1}$ with parameters $\phi^{cen}$; $Q^{cen}(.;\phi^{cen})^{(i)}$ estimates the updated $q$-value of agent ${\mu_i}$ after frequent visits of $\pi$ to $(s_t,a_t^{(\pi)})$, by minimizing the following loss:
\begin{equation}
\mathcal{L}(\phi^{cen}) = \E_{(\mathbf{x},\mathbf{a},r, \mathbf{x}')\sim\mathcal{D}} \bigl[\norm{Q^{cen}(\mathbf{x},\mathbf{a};\phi^{cen})-\mathbf{y}}^2\bigl]
\end{equation}
Where $\mathbf{y}\in \mathbb{R}^{n-1}$, $\mathbf{y}^{(i)}=r^{(i)}+\gamma Q^{(i)}_{\text{target}}(\mathbf{x}',\pmb{\mu}(\mathbf{x}');\bar{\phi}^{cen})$; $Q^{(i)}_{\text{target}}$ is the target critic of agent $\mu_i$,  and $\mathcal{D}$ is a buffer containing all agents' experiences with the exception that the agent $\pi$'s experience is restricted to $(o_{\pi}(s_t),a_t^{(\pi)})$; in other words, for all $(\mathbf{x},\mathbf{a},.)\in \mathcal{D}$, $\mathbf{x}^{(\pi)}=o_{\pi}(s_t)$ and $\mathbf{a}^{(\pi)}=a_t^{(\pi)}$.

After obtaining an estimate of what the $q$-values would be after counterfactual rollouts of $\pi$ starting from $(o_\pi(s_t),a_t^{(\pi)})$, we can now compute the influence $F_\pi$:
\begin{equation}\begin{split}
&  F_{\pi}=\E_{(\mathbf{x}, . , r , \mathbf{x'}) \sim \mathcal{B}} \bigl[ \norm{Q^{cen}(\mathbf{x},\pmb{\mu}(\mathbf{x}))- \mathbf{y}}^2\bigl]\\ 
&\mathbf{y}^{(i)}=r^{(i)}+\gamma Q^{(i)}_{\text{target}}(\mathbf{x'},\pmb{\mu}(\mathbf{x}'))\end{split}
\label{inf}\end{equation} 

Where $\mathcal{B}$ is a buffer storing all agents' transitions. Minimizing $F_\pi$ as a regularizer of its return objective $J_\pi$, $\pi$ adjusts its actions' selections so that $\{\mu_k\}_{k=1}^{n-1}$ can reach their goals faster and more efficiently, thus achieving sufficient coordination. Note that the second term in the expectation (i.e. the target vector $\mathbf{y}$) is set to be undifferentiable with respect to $\pi$'s parameters and thus does not propagate through its network.
\subsubsection{Influence with Multiple Individual Estimators}
As seen earlier,  agent $\pi$ estimates the gap between each agent's value and its target value by employing a single network. Another desirable approach is to use multiple estimators where each estimator, namely $Q_{\text{clone}}^{(i)}$, individually calculates a fairly good approximation of what the $q$-value of $\mu_i$ would be after counterfactual rollouts of $\pi$ starting from $(o_\pi(s_t),a_t^{(\pi)})$. To reduce computational costs and arrive at better estimates, each estimator's network is initialized with the parameters of the corresponding critic network at each episode (i.e. $Q_{\text{clone}}^{(i)}\leftarrow Q^{(i)}$).
The training is carried out similarly to that of the single estimator setting,\begin{equation}
\mathcal{L}(\phi^{(i)}) = \E_{(\mathbf{x},\mathbf{a},r, \mathbf{x}')\sim\mathcal{D}} \bigl[||Q_{\text{clone}}^{(i)}(\mathbf{x},\mathbf{a};\phi^{(i)})-y'_i||^2\bigl]
\end{equation} 

The influence function could be expressed as:\begin{equation}
    \begin{split}
&  F_\pi=\E_{(\mathbf{x}, . , . ,\mathbf{x}') \sim \mathcal{B}}  \frac {1}{n-1} \sum_{i=1}^{n-1}  \norm{Q^{(i)}_
{\text{clone}}(\mathbf{x},\pmb{\mu}(\mathbf{x}))- y_i}^2 \\ 
& y_i= r^{(i)}+\gamma Q_{\text{target}}^{(i)}(\mathbf{x'},\pmb{\mu}(\mathbf{x}'))
\end{split}\end{equation}

\begin{wrapfigure}{R}{0.38\textwidth}
\centering
\includegraphics[width=1\linewidth]{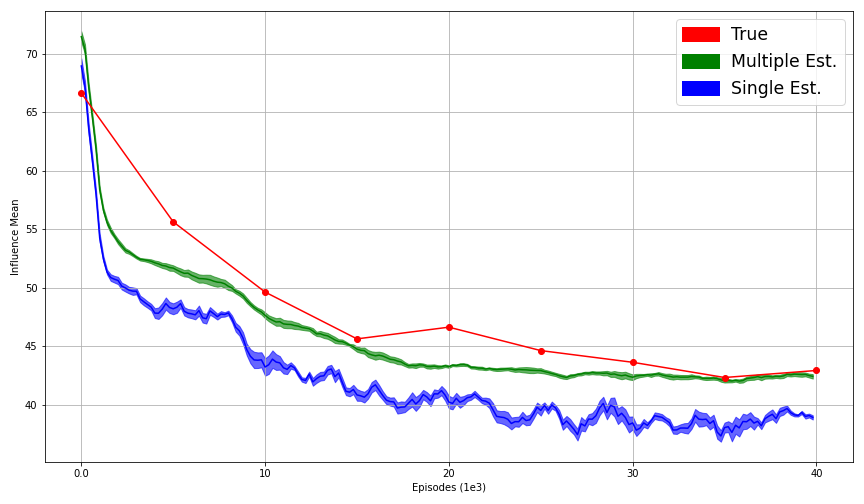}
	\caption[width=\columnwidth]{
		Empirical evaluation of the bias in the proposed methods of measuring influence. The influence values estimated by single and multiple estimators are compared to the true values of $F$. The results for
the estimated values are averaged
across 8 runs.

	}\label{figinf}
\end{wrapfigure}

\textbf{Which approach yields better estimates of the true value of the influence?}
The influence of an agent $\pi$ on a team of agents $T$ was defined as a measure of the improvement in the performance of $T$ given the current behavior of $\pi$. However, measuring the influence using function approximators might result in inaccurate estimates. To resolve this concern,  we plot the influence estimates of the two prior  approaches over time while they learn on the Cooperative Navigation MAPE task \citep{mordatch2018emergence}, where the number of agents is $N=6$. In Figure (\ref{figinf}), we graph the average influence estimates over 40000 episodes and compare it to the true value. The latter can be expressed as the average distance between the true $q$-value and the true $q$-target of each agent given the current behavior of one pre-labeled agent (i.e. the influencer). This distance is then averaged over 1000 episodes following the current policies of agents and is reported every 5000 episodes. The plots show a relatively small bias of both methods during learning. However, as Figure (\ref{figinf}) suggests, measuring influence using multiple individual estimators yielded more accurate values after enough training which substantiates its superiority over the shared network approach. 
Note that, although confirms our prior hypothesis,  this experiment does not reflect the importance of employing the influence on the final performance of the agents as we will discuss that in Section (\ref{exp}).

\subsection{Intrinsic Motivation for Diversified Team Behavior}

In this section, we introduce a framework for achieving cooperative exploration by ensuring that agents are consistently tilted towards visiting under-explored state-action configurations;  we start by providing a simple demonstration which shows that the number of environment steps required for all agents to randomly traverse all possible action configurations increases at least exponentially with the number of agents.
\begin{proposition}\label{prop1}  Consider an $L$-action setting of $n$ agents. In expectation, the number of steps $T$ needed to visit all $L^n$ action configurations at least once without coordinated exploration grows at least exponentially with the number of agents. More concretely,  $\mathbb{E}[T]=\Omega(nL^n )$.\end{proposition}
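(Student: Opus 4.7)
The plan is to reduce this to the classical coupon collector problem. Under uncoordinated exploration, each agent independently samples an action, so the joint action at each step is drawn from a product distribution over the $L^n$ possible configurations. The worst case for the collector's bound is actually the most favorable case for the agents, namely the uniform distribution on the $L \times L \times \cdots \times L$ grid; any skew only increases the hitting time, so it suffices to lower-bound $\mathbb{E}[T]$ under the uniform joint distribution. I would state this reduction first and note explicitly that independence across agents yields uniformity over joint configurations when each marginal is uniform.

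Once the problem is rewritten as collecting all $m \coloneqq L^n$ coupons drawn i.i.d.\ uniformly, I would invoke the standard identity $\mathbb{E}[T] = m H_m$, where $H_m = \sum_{k=1}^{m} \frac{1}{k}$ is the $m$-th harmonic number. The quickest self-contained derivation writes $T = \sum_{k=1}^{m} T_k$, where $T_k$ is the waiting time to see the $k$-th new configuration after the $(k-1)$-th has been observed; then $T_k$ is geometric with success probability $\frac{m-k+1}{m}$, so $\mathbb{E}[T_k] = \frac{m}{m-k+1}$, and summing reindexes to $m H_m$.

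To finish, I would bound $H_m \ge \ln m$ using the standard integral comparison $H_m \ge \int_1^{m+1} \frac{dx}{x} \ge \ln m$. Substituting $m = L^n$ gives
\begin{equation*}
\mathbb{E}[T] \;\ge\; L^n \cdot \ln L^n \;=\; n L^n \ln L \;\ge\; (\ln 2)\, n L^n,
\end{equation*}
for any $L \ge 2$, which is exactly the $\Omega(n L^n)$ bound claimed. This step carries all the work: the factor of $n$ in the target bound comes entirely from $\ln(L^n) = n \ln L$, so without the harmonic factor one would only recover $\Omega(L^n)$.

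The main obstacle, such as it is, is conceptual rather than technical: one has to be careful that ``without coordinated exploration'' is interpreted as independent (uniform) action sampling, and that lower-bounding over this specific exploration policy is what the proposition asserts. A brief remark justifying that non-uniform independent marginals only enlarge $\mathbb{E}[T]$ (e.g.\ by a coupling argument, or by noting that the coupon collector time is minimized at the uniform distribution) would round out the proof.
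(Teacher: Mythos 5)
Your proposal is correct and follows essentially the same route as the paper's proof: decompose $T=\sum_k T_k$ into geometric waiting times with success probability $\frac{M-k+1}{M}$ for $M=L^n$, sum to obtain $M H_M$, and lower-bound the harmonic number by $\ln M$ to get $nL^n\ln L=\Omega(nL^n)$. Your added remark that non-uniform independent marginals only increase the expected cover time is a small strengthening the paper omits, but it does not change the argument.
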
\begin{proof}See Appendix \ref{APP}.
\end{proof}
To mitigate this issue, we assign agent $\pi$ a prediction error as an intrinsic reward to facilitate recognizing and learning novel behaviors:
\begin{equation}\label{distill}
    r^{(\pi)}_t=r^{\text{ext}}_t+\lambda_\pi \underbrace{\norm{\phi(o_\pi(s_t),\mathbf{a}_t^{(\pi)})-(o_\pi(s_t),\mathbf{a}_t^{(\pi)})}^2}_{\psi(s_t,\mathbf{a}_t)}
\end{equation}
Where $\phi$ is an autoencoder network regularly trained on data generated by the policy $\pi$ and $\lambda_\pi$ is a hyper-parameter that balances the extrinsic and intrinsic reward terms. $\psi$'s expression stems from the  observation that when an autoencoder is trained on data from a particular distribution, it will be good at reconstructing data from that distribution, while it will perform poorly if the data is from a different distribution \citep{fayad2021behavior,zhang2019learning}. Thus, by employing $\psi$ as an intrinsic bonus, $\pi$ rewards states' observations and actions that do not belong to the data generated by it.
In practice, $\phi$ is designed to be a relatively large network since we want it to be slightly overfitted to the training data so that it will not accidentally generalize to behaviors that we may deem novel.

Nevertheless, assigning each agent a $\psi$ is not sufficient as it makes the case equivalent to independent exploration approaches. Thus, we propose a coordinated exploration method that takes into account other agents' behaviors, encouraging agents to diversify team behavior while maintaining good performance.

Specifically, we assign agents $\{\mu_i\}_{i=1}^{n-1}$  intrinsic penalty defined as:
\begin{equation}\label{intrinsic}
r_{\mu_i}^{\text{int}}(s_t,\mathbf{a}_t)= -\exp\big(-\omega_{i}\psi(s_t,\mathbf{a}_t^{(\pi)})\big) \norm{\mu_i (o _ {\mu_i} (s_t))-\mathbf{a}_t^{(\mu_{i})}}^2 
\end{equation}
This reward term aims at teaching the agents to recognize previous behaviors and synchronously select novel configurations. To build intuition, consider a case where $N=2$. Whenever $(\pi,\mu)$ select an action tuple in the neighborhood of a frequently-visited tuple $(a_1,a_2)$ in a global state $s$,  $\psi$ will be relatively small and the penalty, $r^{\text{int}}_{\mu}$, will be large. Conversely, if $(\pi,\mu)$ encounter a novel tuple, say $(a_1',a_2')$ in $s$, the small penalty of $\mu$ (Eq. (\ref{intrinsic})) together with a large reward for $\pi$ (Eq. (\ref{distill})) will drive both agents to further explore  this encounter.

In all, Fig. (\ref{figinf}) shows how this framework can be augmented with the basic influence introduced earlier to reinforce learning and discovering coordinated behaviors.
\section{Empirical Evaluation \& Analysis}\label{exp}
The goals of our experiments are to: a) verify the performance of our method on a comprehensive set of multi-agent challenges (SMAC, MAPE, sparse-reward settings, and continuous control environments); b) perform ablations to examine which particular components of the proposed framework are important for good performance.

\begin{figure}[t]
\centering
\begin{subfigure}[b]{0.55\textwidth}
\includegraphics[width=1\linewidth]{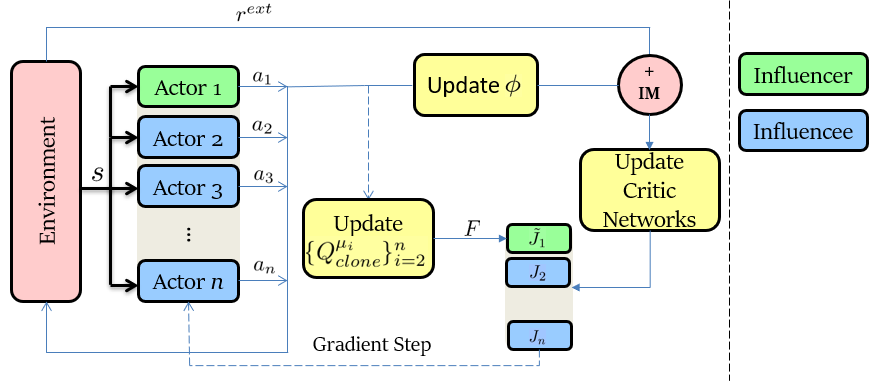}
\end{subfigure}
	\caption{
		Architecture of the general proposed method 
	}\label{figinf}

\vskip -0.1in 
\end{figure}

\subsection{Cooperative \& Mixed Games}

\subsubsection{StarCraft Multi-Agent Challenge}
StarCraft provides a rich set of heterogeneous units each with diverse actions, allowing for extremely complex cooperative behaviors among agents. We thus evaluate our method on several SC micromanagement tasks from the SMAC  \footnote{\url{https://github.com/oxwhirl/smac}} benchmark  \citep{samvelyan2019starcraft}, where a group of mixed-typed units controlled by decentralized agents needs to cooperate to defeat another group of mixed-typed enemy units controlled by built-in heuristic rules with “difficult” setting; the battles can be both symmetric (same units in both groups) or asymmetric. Each agent observes its own status and, within its field of view, it also observes other units’ statistics such as health, location, and unit type (partial observability); agents can only attack enemies within their shooting range. A shared reward is received on battle victory as well as damaging or killing enemy units. Each battle has step limits set by SMAC and may end early. We consider 4 battle maps grouped into \textbf{Easy} (2s3z), \textbf{Hard} (5m\_vs\_6m, 3s\_vs\_5z), and \textbf{Super Hard} (corridor) against 6 baseline methods using their open-source implementations based on PyMARL  \citep{samvelyan2019starcraft}: COMA \citep{foerster2018counterfactual}, IQL \citep{Tan93multi-agentreinforcement}, VDN \citep{Sunehag2018ValueDecompositionNF}, QMIX \citep{rashid2018qmix}, LIIR (Individual Intrinsic Rewards)  \citep{du2019liir}, and LICA (Implicit Credit Assignment) \citep{zhou2020learning}.

The corridor map, in which 6 Zealots face 24 enemy Zerglings, requires agents to make effective use of the terrain features and block enemy attacks from different directions. A properly coordinated exploration scheme applied to this map would help the agents discover a suitable unit positioning quickly and improve performance, while 2s3z requires agents to learn “focus fire" and interception.  For the asymmetric 5m\_vs\_6m, basic agent coordination alone such as “focus firing” no longer suffices \citep{du2019liir} and consistent success requires extended exploration to uncover complex cooperative strategies such as pulling back units with low health during combat. The 3s\_vs\_5z scenario features three allied Stalkers against five enemy Zealots. Since Zealots counter Stalkers, the only winning strategy for the allied units is to kite the enemy around the map and kill them one after another, causing the failure of independent learning algorithms to learn good policies in this task.

For all these scenarios, our method consistently shows the best performance with significant learning speed. Detailed results are reported in Figure (\ref{smac}) as they present the median win rate of the methods during the training across 12 random runs. 

\begin{figure}[h!]
 
\centering
\begin{subfigure}[b]{0.24\linewidth}
    \includegraphics[width=\linewidth]{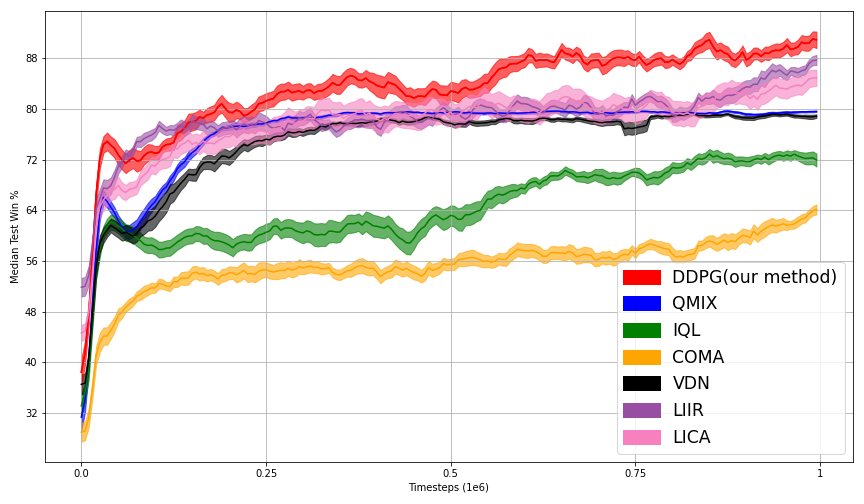}
     \caption{2s3z \textbf{Easy}}
  \end{subfigure}   
\begin{subfigure}[b]{0.24\linewidth}
    \includegraphics[width=\linewidth]{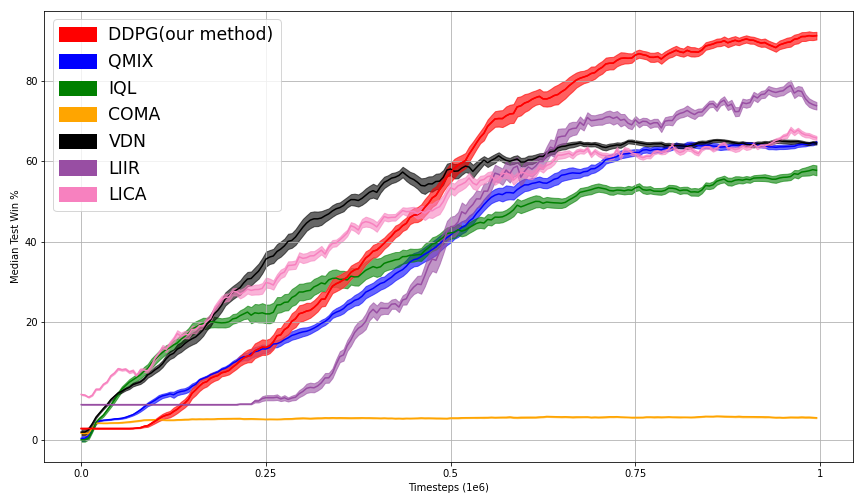}
     \caption{3s\_vs\_5z \textbf{Hard}}
  \end{subfigure}
\begin{subfigure}[b]{0.24\linewidth}
    \includegraphics[width=\linewidth]{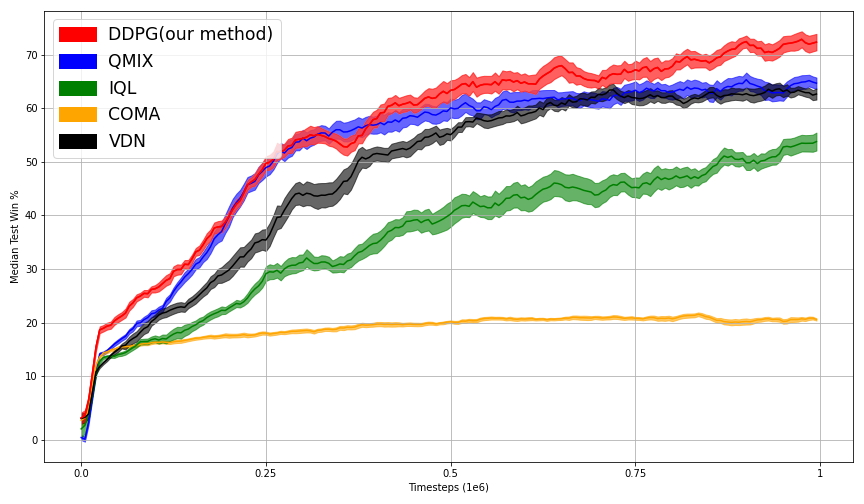}
     \caption{5m\_vs\_6m \textbf{Hard}}
\end{subfigure} 
\begin{subfigure}[b]{0.24\linewidth}
    \includegraphics[width=\linewidth]{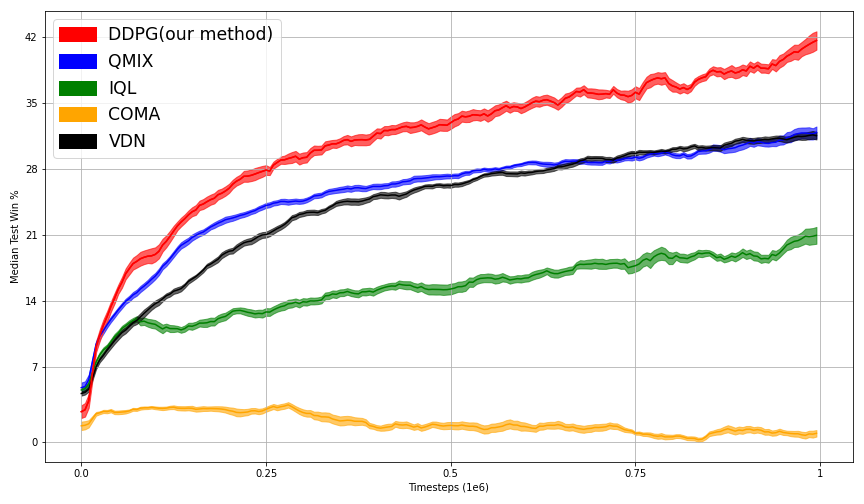}
     \caption{corridor \textbf{SuperHard}}
  \end{subfigure}
\caption{The median test win \%  of various methods across the SMAC scenarios.}
\label{smac}
\end{figure}

\subsubsection{Sparse-Reward Settings}
We test on two additional tasks to show the effectiveness our method on sparse-reward settings and compare it to famous influence-based coordinated exploration algorithms (Table (\ref{pushbox})): EDTI, EITI \citep{wang2019influence}, and Social Influence \citep{jaques2019social}.

\textbf{Sparse Push-Box:}  A $15 \times 15$ room is populated with 2 agents and 1 box.  Agents need to push the box to the wall in 300 environment steps to get a reward of 1000. Moreover, both can observe the coordinates of their teammate and the location of the box. However, the box is so heavy that only when two agents push it in the same direction at the same time can it be moved a grid. Agents need to coordinate their positions and actions for multiple steps to earn a reward.

\textbf{Sparse Secret Room:} A $25 \times 25$ grid is divided into three small rooms on the right and one large room on the left where 2 agents are initially spawned. There is one door between each  small  room  and  the  large  room.   A  switch  in  the large  room  controls  all  three  doors.   A  switch also exists in  each small room which only controls the room’s door. The agents need to navigate to one of the three small rooms, i.e. the target room, to receive positive reward. The task is considered solved if both agents are in the target room. The state vector contains $(x,y)$ locations of all agents and binary variables to indicate if doors are open.

\begin{table*}[ht!]
\centering
\caption{\label{pushbox} Results on the \textbf{Push-Box} and \textbf{Secret Room} tasks after 20000 updates across 10 runs. }

\begin{center}
\begin{small}
\begin{tabular}{lcccccc}
\toprule
& \multicolumn{2}{c}{ Push-Box} & \multicolumn{2}{c}{ Secret Room} \\
Agents & \textbf{Team Performance } & \textbf{Performance std} & \textbf{Avg Success Rate} & \textbf{Success Rate std}   \\ \hline

DDPG(our method) &  \textbf{146.66  }  & 34.13 &  \textbf{0.68  }  & 0.04\\
EDTI & \textbf{135.84   } & 45.20 & {0.34 } & 0.02\\
Social Influence &  86.67   & 65.81 &  0.25   & 0.10\\
EITI &  75.09   & 78.54 &  0.46   & 0.06\\

\bottomrule
\end{tabular}
\end{small}
\end{center}
\vskip -0.2in
\end{table*}

\subsubsection{Multi-Agent Particle Environments}\label{discrete}
To understand how the proposed method helps agents achieve cooperative behavior in nonstationary settings, we conduct experiments on the grounded communication environment  \footnote{\url{https://github.com/openai/multiagent-particle-envs}} proposed in \citep{mordatch2018emergence, lowe2017multi}. Each task consists of multiple agents ($N \geq 2$) and $L$ landmarks in a two-dimensional world with continuous space and discrete time. Both agent and landmark entities inhabit a physical location in space and posses descriptive physical characteristics, such as color and shape type. For that purpose, we adapt the DDPG algorithm as a learning framework and train with 10 random seeds. Results of the following tasks are reported in Tables (\ref{tab:simple_spread}, \ref{tab:simple_speaker_listener}, \ref{tab:simple_adversary}).

\textbf{Cooperative Navigation:} 
In this environment, $N$ agents must collaborate to reach a set of $N$ landmarks with known positions. Agents are rewarded based on how far any agent is from each landmark, meaning that the agents learn to spread with each agent covering one landmark. The agents, which occupy a significant physical space,  are aware of their relative positions to each other and are further penalized when colliding with each other.

\begin{table*}[ht!]

\caption{\label{tab:simple_spread} Avg \# of collisions per episode and avg agent distance from a landmark in the \textbf{cooperative navigation} task, after 25000 episodes.}
\centering\begin{small}
\begin{tabular}{l c c c c}
\toprule
& \multicolumn{2}{c}{ $N = 3$} & \multicolumn{2}{c}{ $N = 6$} \\
Agent $\pol$& \textbf{Average dist.}&    \textbf{\# collisions} & \textbf{Average dist.}&    \textbf{\# collisions}   \\ \hline
DDPG(our method) &  \textbf{1.559} & \textbf{0.185} & {3.349} & \textbf{1.294}  \\
MADDPG & 1.767 &0.209 & \textbf{3.345} & 1.366  \\
DDPG(No influence function) &  1.858 & 0.375 & 3.350 & 1.585 \\
\bottomrule
\end{tabular}
\end{small}
\end{table*}

\textbf{Cooperative Communication:}
Here, a stationary speaker must guide a listener in an environment consisting of three landmarks of differing colors. At each episode, one landmark of a particular color is set as a goal for the listener to be reached, however, only the speaker can observe which landmark the listener must navigate to. Moreover, The speaker can produce a communication output at each time step which is observed by the listener. The latter must navigate the environment to reach the correct landmark. Agents are collectively rewarded at the end of an episode based on the listener's distance from the correct landmark.

\begin{table*}[ht!]
\centering
\caption{\label{tab:simple_speaker_listener} Percentage of episodes where the agent reached the target landmark and average distance from the target in the \textbf{cooperative communication} environment, after 25000 episodes.}

\begin{center}
\begin{small}
\begin{tabular}{lcccccc}
\toprule
Agent & \textbf{Target reach \% }&    \textbf{Average distance}   \\ \hline
DDPG(our method) &  \textbf{90.3\%} & \textbf{0.093}  \\
MADDPG & 84.0\% & 0.133  \\
DDPG &  32.0\% & 0.456 \\
DQN & 24.8\% & 0.754 \\
Actor-Critic & 17.2\% & 2.071 \\
TRPO & 20.6\% & 1.573 \\
REINFORCE &  13.6\%  & 3.333 \\

\bottomrule
\end{tabular}
\end{small}
\end{center}
\vskip -0.1in
\end{table*}

\textbf{Physical Deception:} 
This environment consists of $N$ agents and $N$ landmarks, with one landmark as the target of all agents. The agents are rewarded based on the distance of the closest agent to the target landmark, making it sufficient for only one agent to reach it. An adversary agent also tries to reach the target landmark, while the agents are penalized as it gets closer to the target. The adversary, however, does not know which landmark is the target and must deduce it from the agents' behavior. For that reason, agents must cooperate to trick the adversary by learning to cover all the landmarks. This task shows that our algorithm  is applicable not only to cooperative interactions but to mixed environments as well.

\begin{table*}[ht!]
\centering\begin{small}

\caption{\label{tab:simple_adversary} Results on the \textbf{physical deception} task, with $N=2$ cooperative agents/landmarks. Success (\textit{succ \%}) for agents (AG) and adversaries (ADV) is if they are within a small distance from the target landmark. }
\begin{tabular}{l l c c c c c c}
\toprule

Agent $\pol$ & Adversary $\pol$ & \textbf{AG succ \% }&    \textbf{ADV succ \%}& \textbf{$\Delta$ succ \%} \\ \hline
DDPG(our method) & MADDPG & 95.2\% & 45.1\% & 50.1\%  \\
MADDPG & MADDPG & 94.4\% & 39.2\% & 55.2\%  \\
MADDPG & DDPG &  92.2\% & 16.4\% & {75.8\%} \\
DDPG & MADDPG & 68.9\%  & 59.0\% & 9.9\%  \\
DDPG & DDPG & 74.7\%  & 38.6\% & 36.1\%  \\
\bottomrule
\end{tabular}
\end{small}
\vskip -0.1in
\end{table*}

\subsection{Continuous Environments}
To confirm the scalability of our algorithm to large continuous settings, we measure the performance of our algorithm on a suite of PyBullet \citep{tan2018sim} continuous control tasks, interfaced through OpenAI Gym \citep{brockman2016openai}. Gym environments, however, are mainly single-agent settings, thus to evaluate our approach, we reframe the problem by introducing an additional learning agent that acts as an auxiliary agent. Crucially, both agents work collaboratively in order to find a region of the solution space where an agent accumulates higher rewards. We use TD3 \citep{fujimoto2018addressing} as our learning model and test it against state-of-the-art algorithms in 5 gym environments. Our algorithm outperforms all baselines across all different environments (e.g. our method attains 131\% return of SAC final performance on Humanoid-v3). For detailed results, see Appendix \ref{ADD}.


\subsection{Ablations}\label{abl}
\begin{wrapfigure}[14]{R}{0.4\textwidth}
\centering
\includegraphics[width=1\linewidth]{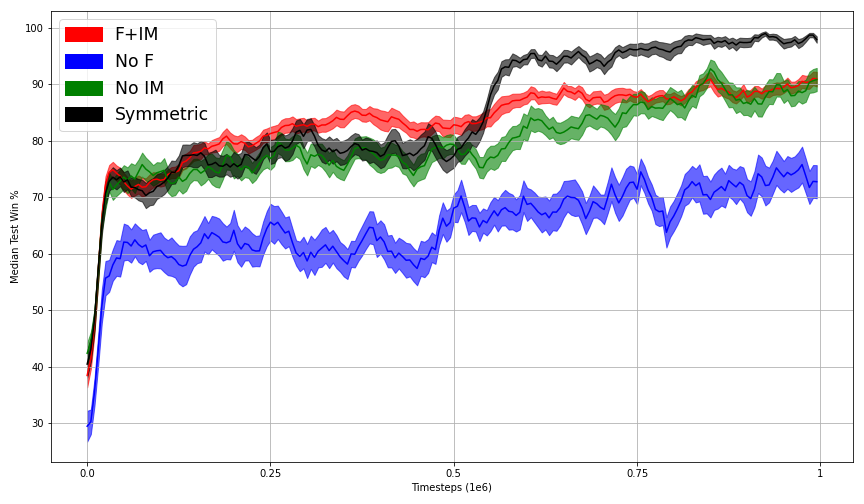}
	\caption[width=\columnwidth]{
		Ablations for different components of our framework on 2s3z scenario.
	}\label{ablfig}
\end{wrapfigure} 
We further investigate the significance of each component along with a symmetric extension of the proposed framework. Specifically, we consider the three cases: 1) \textbf{No F}: where the influence function does not contribute to the update rule to any of the policies; 2) \textbf{No IM}: where a randomly selected agent maximizes both the expected sum of $extrinsic$ rewards along with the influence function, and other agents' policies are learned using the DDPG; 3) \textbf{Symmetric}: where all agents simultaneously play the role of an influencer and influencee: they  learn to maximize an augmented reward function ($extrinsic$ and $intrinsic$) along with the influence function.

Results of the experiments conducted on the 2s3z SMAC scenario show that, in the absence of the intrinsic rewards (\textbf{No IM}), the agents experience a slightly decreased overall performance  when compared to the significant decline induced by detaching the influence function (\textbf{No F}). 

In Figure (\ref{ablfig}),  we observe that the agents following the \textbf{Symmetric} approach learn faster, and
achieve a significantly higher median win rate. This approach, however, doubles the computational costs which restricts its applicability in larger settings.
\section{Related Work}\label{related}
We discuss recently developed methods for exploration in RL using intrinsic motivation, coordination in multi-agent  RL, and influence-based coordinated exploration methods subsequently.

Intrinsic motivation (IM) has been increasingly used both in single-agent RL and multi-agent RL. A core idea of IM is to encourage the agent to take new actions or visit new states, thus exploring the environment and obtaining more diverse behaviors. One common approach is to approximate state or state-action visitation frequency and add a reward bonus to states the agent rarely covers \citep{Tang2017ExplorationAS,Bellemare2016UnifyingCE,martin2017count}. A more related IM approach is to evaluate state visitation novelty \citep{klissarov2019variational, han2020curiosity, burda2018exploration} or state-action visitation novelty \citep{fayad2021behavior}. Inspired by the latter, we provided a natural extension for this approach to the MARL settings by the learning of a "novelty" function. Other works make use of single-agent IM to construct their multi-agent intrinsic reward \citep{du2019liir,iqbal2019coordinated}. Each agent in \citep{du2019liir} learns a distinct intrinsic reward so that the agents are stimulated differently, even when the environment only feedbacks a team reward. This reward helps distinguish the contributions of the agents when the environment only returns a collective reward. In \citep{iqbal2019coordinated}, each agent has a novelty function that assesses how novel an observation is to it, based on its past experience. Their multi-agent intrinsic reward is defined based on how novel all agents consider an agent's observation. A recent work \citep{liu2021cooperative} assigns agents a common goal while exploring.  The goal is selected from multiple projected state spaces via a normalized entropy-based technique. Then, agents  are  trained  to  reach  this  goal  in  a  coordinated  manner.

Many works studied the cooperative settings in MARL; a straightforward approach is to use independent learning agents (fully decentralized learning). This approach, however, is shown to perform inadequately both with $Q$-learning \citep{matignon2012independent} and with policy gradient \citep{lowe2017multi}. Therefore, we considered the CTDE paradigm, where each agent's policy takes its individual observation as many real life applications dictate, while the centralized critic permit for sharing of information during training. Policy gradient methods have been commonly used along with the CTDE paradigm in MARL, either by implementing a single centralized critic for all agents \citep{foerster2018counterfactual}, or one centralized critic for each agent \citep{lowe2017multi}. Adopting the latter, we enable agents with different reward functions to learn in competitive and mixed scenarios as well.

Some other works encouraged cooperative interactions between agents by sharing useful information \citep{yang2020learning,hostallero2020inducing}. In \cite{hostallero2020inducing}, each agent broadcasts a signal that represents an assessment of the effect of the joint actions that all agents take on its expected reward. Different from our approach, this signal encourages agents to behave as is expected of them and does not benefit exploration.
As for \cite{yang2020learning}, each agent learns an incentive function that rewards other agents based on their actions. Each agent's function aims to alter other agents’ behavior to maximize its extrinsic rewards. To accomplish this, each agent requires access to every other agent's policy, incentive function, and return making this approach difficult to scale and execute. Additionally, \cite{roy2019promoting} proposed two policy regularizers approaches to promote coordination in a team of agent, one of which assumes that an agent must be able to predict the behavior of its teammates in order to coordinate with them, while the other supposes that coordinated agents collectively recognize different situations and synchronously switch to different sub-policies to react to them.

Similarly to our work, \citep{jaques2019social} proposed a similar idea of rewarding an agent for having a casual influence on other agents' actions. Their method showed interesting results in terms of learning coordinated behavior. However, this casual influence is designed to reward policies for influencing other policies' actions without considering the "quality" of this influence. \cite{barton2018coordination} propose causal influence as a way to measure coordination between agents, specifically using Convergence Cross Mapping (CCM) to analyze the degree of dependence between two agents’ policies. Our method also draws inspiration from the work of \citep{wang2019influence}, as they define an influence-based intrinsic exploration bonus by the expected difference between the action-value function of one agent and its counterfactual action-value function without considering the state and action of the other agent.

\section{Conclusions \& Future Work}
We introduced a novel multi-agent RL algorithm for achieving coordination through assessing the influence an agent has on other agents' behaviors. Additionally, we proposed to learn an intrinsic reward for each agent to promote coordinated team exploration. We tested our algorithm on a wide variety of tasks with many challenges, such as partial observability, sparse rewards, and large spaces; these tasks include, but not limited to, SMAC, MAPE, as well as OpenAI gym continuous environments. Our methods achieved noticeable improvement over prominent algorithms on all tasks. One promising extension of our algorithm is to use Graph Attention Networks \citep{velivckovic2017graph,Zhou2020GraphNN} to learn the importance of the influencer in determining the influencees' policies and to establish a message-passing architecture in networked systems. The investigation of the effectiveness of these methods is left for future works.

\section{Reproducibility Statement}
We have provided an illustration of the proposed algorithm in Fig. (\ref{figinf}) along with implementation details and hyperparameters selection in Appendix \ref{app:training_details}. Furthermore, code is submitted with Supplementary Material and each algorithm is evaluated at least 10 times using random seeds on all environments.
\bibliography{iclr2022_conference}
\bibliographystyle{iclr2022_conference}
\newpage
\appendix
\section{Proofs}\label{APP}
\textbf{Proposition 1.} Consider an $L$-action setting of $n$ agents. In expectation, the number of steps $T$ needed to visit all $L^n$ action configurations at least once without coordinated exploration grows at least exponentially with the number of agents. More concretely,  $\mathbb{E}[T]=\Omega(nL^n )$.
\begin{proof}
Let $M=L^n$. Since agents tend to visit different action configurations with no coordinated behavior, one can equivalently say that agents uniformly pick a  configuration out of all $L^n$ possible configurations at each step. Let $T_k$ be the number of steps to visit the $k$-th distinct configuration after covering $k-1$ distinct action tuples. Observe that:\begin{equation}\label{ET}
    \mathbb{E}[T]=\sum_{k=1}^{M}\mathbb{E}[T_k]
\end{equation} 
Now, $\text{Pr}[T_k=i]= \bigl(\frac{k-1}{M}\bigl)^{i-1}\bigl(1-\frac{k-1}{M}\bigl)$ meaning that $T_k$ follows a geometric distribution. Thus, \begin{equation}
    \mathbb{E}[T_k]=\sum_{i=1}^{\infty} \biggl(\frac{k-1}{M}\biggl)^{i-1}\biggl(\frac{M-k+1}{M}\biggl) i= \frac{M}{M-k+1}
\end{equation}
Getting back to Eq. (\ref{ET}),\begin{equation}
    \mathbb{E}[T]=M\sum_{k=1}^{M}\frac{1}{M-k+1}=M\sum_{k=1}^{M}\frac{1}{k}>M\int_{1}^{M} \frac{1}{x} dx = M\ln M = n L^n \ln L
\end{equation}
And the conclusion follows.
\end{proof}

\section{Additional Experiments on Continuous Environments}\label{ADD}

Since the formulation of $F$ needs a shared buffer, SAC and TD3 stand as the best off-policy candidates to be incorporated with our framework, as they have shown great performances on many benchmarks. SAC, however, uses stochastic policies in general which makes it infeasible to combine with the formulation of $r^{\text{int}}$. Therefore, we use TD3 as our learning model to measure its performance on a suite of PyBullet \citep{tan2018sim} continuous control tasks, interfaced through OpenAI Gym \citep{brockman2016openai}. While many previous works utilized the Mujoco \citep{todorov2012mujoco} physics engine to simulate the system dynamics of these tasks, we found it better to evaluate our method on benchmark problems powered by PyBullet simulator since it is widely reported that PyBullet problems are harder to solve \citep{tan2018sim} when compared to Mujoco. Also, Pybullet is license-free, unlike Mujoco that is only available to its license holders.
\begin{figure}[h!]
 
\centering
\begin{subfigure}[b]{0.25\linewidth}
    \includegraphics[width=\linewidth]{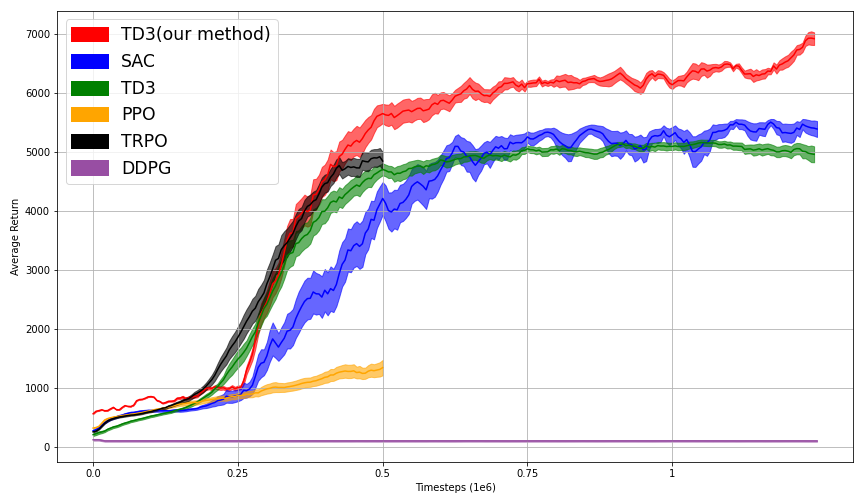}
     \caption{Humanoid-v3}
\end{subfigure}
  \begin{subfigure}[b]{0.25\linewidth}
    \includegraphics[width=\linewidth]{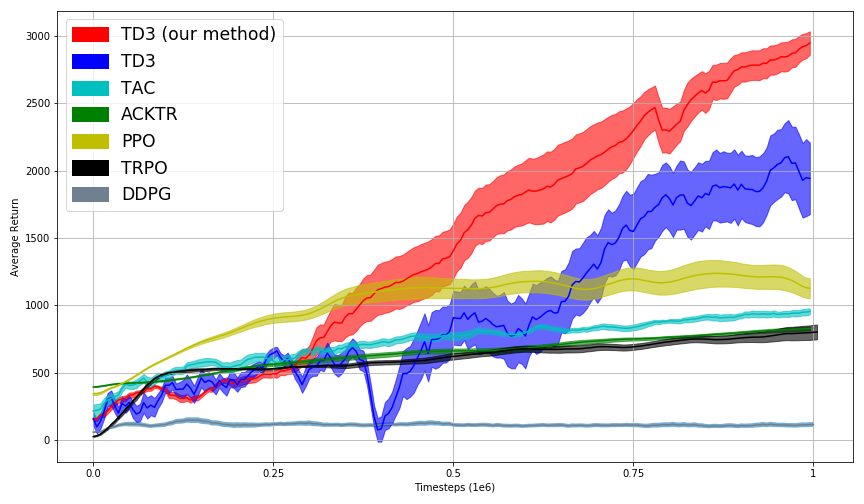}
     \caption{Ant}
  \end{subfigure}
\begin{subfigure}[b]{0.25\linewidth}
    \includegraphics[width=\linewidth]{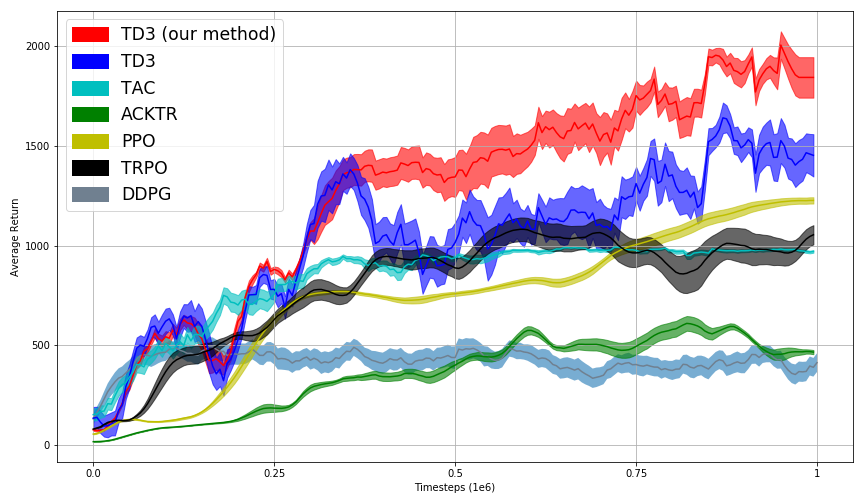}
     \caption{Walker2D}
\end{subfigure}
\begin{subfigure}[b]{0.25\linewidth}
    \includegraphics[width=\linewidth]{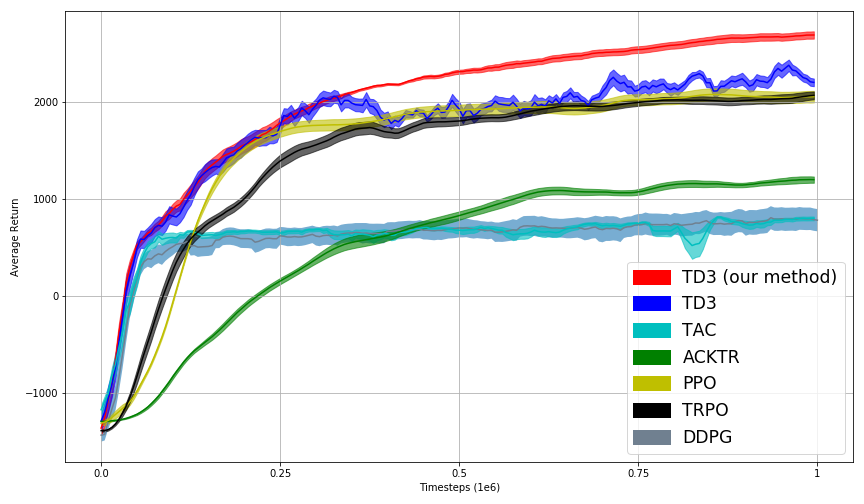}
     \caption{HalfCheetah}
  \end{subfigure} 
\begin{subfigure}[b]{0.25\linewidth}
    \includegraphics[width=\linewidth]{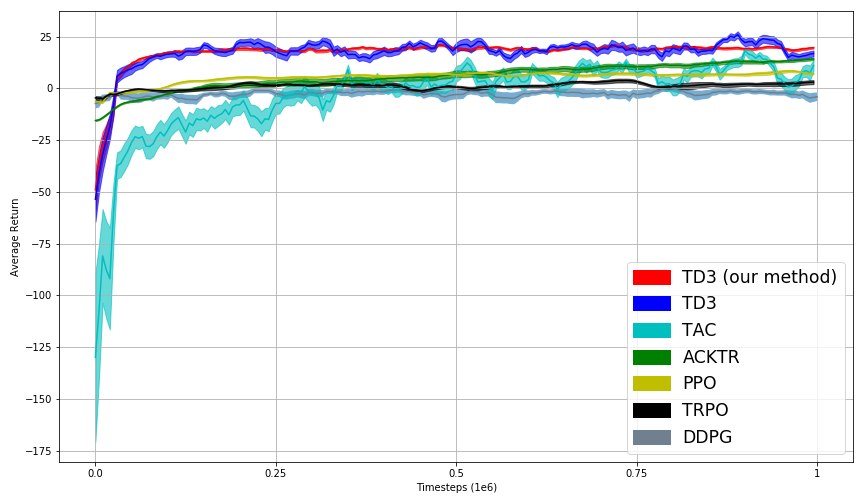}
     \caption{Reacher}
\end{subfigure}

\caption{Learning curves for the OpenAI gym continuous control tasks. The shaded region represents quarter a standard deviation of the average evaluation. Curves are smoothed for visual clarity.}
\label{1}
\end{figure}\\

We compare our method to the original twin delayed deep deterministic policy gradients (TD3) \citep{fujimoto2018addressing}; soft actor critic (SAC) \citep{haarnoja2018soft}; proximal policy optimization (PPO) \citep{schulman2017proximal}, a stable and efficient on-policy policy gradient algorithm; deep deterministic policy gradient (DDPG); trust region policy optimization (TRPO) \citep{Schulman2015TrustRP}; Tsallis actor-critic (TAC) \citep{chen2019off}, a recent off-policy algorithm for learning maximum entropy policies, where we use the implementation of the authors\footnote{\url{https://github.com/haarnoja/sac}} \footnote{\url{ https://github.com/yimingpeng/sac-master}}; and Actor-Critic using Kronecker-Factored Trust Region (ACKTR) \citep{Wu2017ScalableTM}, as implemented by OpenAI’s baselines repository \footnote{\url{https://github.com/openai/baselines}}. Each task is run for at least 1 million time steps and the average return of 15 episodes is reported every 5000 time steps. To enable reproducibility, each experiment is conducted on 10 random seeds of Gym simulator and network initialization. Results of the best performing agent of the two across different methods are reported in Figure (\ref{1}).

\section{Training details}\label{app:training_details}
\subsection{General Configurations}
 We use a buffer-size of $10^6$ entries and a batch-size of $1024$. We collect $100$ transitions by interacting with the environment for each learning update. For all tasks in our hyper-parameter searches, we train the agents for $15,000$ episodes of $100$ steps and then re-train the best configuration for each algorithm-environment pair for twice as long ($30,000$ episodes) to ensure full convergence for the final evaluation. We use a discount factor $\gamma$ of $0.95$, an influence importance temperature $\beta$ of $0.1$, and a gradient clipping threshold of $0.5$ in all experiments unless otherwise specified. Each cloned critic is updated 4 time per step.

\subsection{Sparse Push Box and Sparse Secret Room, MAPE, \& Gym}
We use the Adam optimizer \citep{kingma2014adam} to perform parameter updates. All models (actors, critics and proxy critics) are parametrized by feedforward networks containing two hidden layers of $128$ units excpet for the autoencoder network where we use 7 hidden layers with dimensions (128, 64, 12, 3, 12, 64, 128), respectively. All models' parameters are initialized using Glorot Initialization method \citep{glorot2010understanding}; while the autoencoder's parameters are initialized using Kaiming method \citep{he2015delving}. We employ the Rectified Linear Unit (ReLU)  as activation function and layer normalization \citep{ba2016layer} on the pre-activations unit to stabilize the learning.

\begin{table}[H]
\centering
\begin{sc}
\caption{Best found hyper-parameters for the Sparse-reward tasks, MAPE, \& Gym environments}
\resizebox{\textwidth}{!}{%
\begin{tabular}{lccccc}
\label{table:hp_chase}
Hyper-parameter          &Push Box      &Secret Room             &MAPE  &Humanoid-v3        &Gym (except for Humanoid-v3)   
\\ \hline

$\lambda_\pi$        &0.10             &0.10             &0.01               &0.10           &$0.10$\\
$\{\omega_i\}_1^{n-1}$        &0.10             &0.10              &0.01               &0.10        &$0.10$\\
$\beta$        &0.15             &0.10              &0.10               &0.15         &$0.1$\\
\end{tabular}}
\end{sc}
\end{table}

\subsection{SMAC}
The architecture of all agent networks is a DRQN \citep{hausknecht2015deep} with a recurrent layer comprised of a GRU with a $64$-dimensional hidden state, with a fully-connected layer before and after. All neural networks are trained using RMSprop ($\alpha=0.99$ with no weight decay or momentum) with learning rate $5\times 10^{-4}$.

\begin{table}[H]
\centering
\begin{small}
\caption{Best found hyper-parameters for the SMAC environments}
\resizebox{0.7\textwidth}{!}{%
\begin{tabular}{lccccc}
\label{table:hp_chase}
Hyper-parameter          &Corridor      &5m\_vs\_6m            &3s\_vs\_5z  &2s3z           
\\ \hline

$\lambda_\pi$        &0.09             &0.03              &0.01              &$0.01$\\
$\{\omega_i\}_1^{n-1}$         &0.03             &0.03              &0.01              &$0.01$         \\
$\beta$        &0.15             &0.15              &0.10               &0.10         \\

\end{tabular}}
\end{small}
\end{table}

\end{document}